\RequirePackage{etoolbox}
\patchcmd{\bibliographystyle}{#1}{abbrvnat}{}{}

\documentclass[11pt]{article}

\usepackage[utf8]{inputenc}
\usepackage[T1]{fontenc}
\usepackage{authblk}

\usepackage[a4paper,margin=1in]{geometry}
\usepackage{amsmath,amssymb,amsthm}
\usepackage{graphicx}
\usepackage{tikz}
\usetikzlibrary{arrows.meta,positioning,shapes.geometric}

\usepackage{booktabs}
\usepackage{algorithm2e}
\usepackage{rotating}
\usepackage{subcaption}

\usepackage[numbers,sort&compress]{natbib}

\usepackage[hidelinks]{hyperref}

\DeclareMathOperator{\Var}{Var}
\DeclareMathOperator{\Cov}{Cov}

\newtheorem{theorem}{Theorem}
\newtheorem{proposition}[theorem]{Proposition}

\newtheorem{remark}[theorem]{Remark}

\theoremstyle{definition}

\title{
How Does Bayesian Causal Discovery Fail?\\
\large Characterising Structural Consequences in Linear Gaussian Networks
under Latent Confounding
}

\author[1]{Debargha Ghosh}
\author[1]{Silja Renooij}
\author[2]{Anna V. Kononova}

\affil[1]{Department of Information and Computing Sciences, Utrecht University}
\affil[2]{Leiden Institute of Advanced Computer Science, Leiden University}

\date{}

\begin{document}

\maketitle

\begin{abstract}
Bayesian causal discovery is widely used for its ability to quantify epistemic uncertainty over directed acyclic graphs (DAGs) through posterior inference. However, its behaviour under latent confounding remains poorly understood, as existing work typically notes that confounding breaks identifiability without characterising how the posterior distribution over DAGs responds. In this work, we analyse posterior behaviour under latent confounding in linear Gaussian causal models, focusing on additive latent confounding between exactly two observed variables. We derive a critical correlation threshold above which the score function
favours graphs with a  spurious edge between the confounded variables, and show that
this threshold decreases with sample size -- more data lowers the
correlation required for the spurious edge to be favoured. Beyond this threshold, we characterize two distinct posterior failure regimes determined by the local structure around the confounded variables. Our findings are supported by exact posterior computations on multiple graph structures, demonstrating both the predicted failure regimes.

\end{abstract}

\section{Introduction}
 
Discovering the causal relationships underlying a system of variables is a central problem
across scientific domains. Bayesian networks and structural equation models provide a principled framework for representing such dependencies \citep{pearl2009causality,koller2009probabilistic}.
Structure learning methods aim to recover a graph representing the underlying data generating process from observational data, but are inherently limited by statistical uncertainty and identifiability constraints \citep{glymour2019review}. In contrast, Bayesian causal structure learning
infers a posterior distribution $p(G \mid D)$ over candidate directed acyclic graphs (DAGs) representing the underlying causal system
\citep{friedman2003being}, enabling principled quantification of epistemic uncertainty. This
distribution allows assessing confidence in inferred structures and supports downstream tasks
such as experimental design and active causal discovery \citep{tong2001active,murphy2001active}, where uncertainty over candidate
graphs plays a central role. However, how this posterior uncertainty changes under violations
of modelling assumptions, such as latent confounding, is not well understood.

The reliability of posterior-based causal conclusions from observational data---namely,
whether posterior probabilities faithfully reflect uncertainty over plausible structures---depends
on two key components: identifiability and scoring. On the identifiability side, under the assumptions of the causal Markov
condition, causal sufficiency (lack of latent confounding), and
faithfulness, observational data can distinguish causal structures only
up to Markov equivalence \citep{verma1990equivalence,vonk2023disentangling}, even under infinite data. On the scoring side, the choice of score---such as the Bayesian Gaussian equivalent (BGe) score in linear Gaussian models \citep{geiger2002parameter,kuipers2014addendum}—determines the likelihood and prior that govern how evidence is translated into posterior probabilities over candidate
causal structures, under assumptions such as independent exogenous noise and Gaussianity. While both identifiability and scoring are well understood under standard assumptions, it
remains unclear how their interplay affects posterior behaviour when these assumptions are
violated. 

Of these assumptions, causal sufficiency is particularly consequential for causal discovery from observational data. The assumption is not testable in practice, yet it is routinely violated in many applications. Causal discovery methods are widely applied to settings such as gene regulatory network inference from expression data \citep{friedman2000using, sachs2005causal, affeldt2016learning}. Here, unmeasured cell states, batch effects, and latent regulatory factors can induce confounding effects~\citep{leek2012sva, chen2017controlling, wang2017efficient}. Similarly, in clinical epidemiology, Bayesian structure learning is used on observational patient data where unmeasured characteristics are a recognised source of confounding bias \citep{kyrimi2021bayesian}. For Bayesian causal discovery, these practical
violations raise the question of how posterior uncertainty behaves under latent confounding.

Existing work on Bayesian causal discovery and latent confounding falls into two main categories. \textbf{(i)} A substantial line of work develops Bayesian inference over DAGs under the assumption of causal sufficiency---including order MCMC \citep{friedman2003being} and GADGET \citep{viinikka2020gadget}. \textbf{(ii)} A parallel line addresses confounding by switching to richer model classes, motivated by the fact that in the presence of latent confounders the causal structure over observed variables is generally not identifiable as a causally sufficient DAG from observational data \citep{spirtes2000causation, richardson2002ancestral}. This includes constraint-based methods such as FCI and its variants \citep{spirtes2000causation, zhang2008fci}, which return partial ancestral graphs, and Bayesian inference jointly over acyclic directed mixed graphs (ADMGs) and latent variables \citep{ashman2023neural}---solving the problem by changing the inference target. Empirical benchmarks under assumption violations further show that most benchmarked causal discovery methods do not explicitly support confounding effects, and that spurious correlations induced by latent confounding degrade edge selection and graph recovery \citep{montagna2023assumption}. None of these approaches address the \textit{critical question} of what happens when Bayesian causal discovery methods operating on DAGs are applied to data with unobserved confounding: whether the posterior appropriately reflects the resulting uncertainty, or instead concentrates on systematically incorrect inferred structures. More broadly, is this behaviour structured and predictable, or inherently unreliable?

In this paper, we address the above question by providing a structural characterisation of posterior behaviour under latent confounding. We consider a simple but representative setting of a single additive latent confounder affecting exactly two observed variables in linear Gaussian models with additive noise. Within this setting, we derive a sample-size-dependent critical correlation threshold, beyond which the posterior begins to shift towards incorrect causal graphs.
We then identify two qualitatively distinct failure regimes in which posterior behaviour is governed by the structural properties of the local region affected by confounding.


\section{Background}
\label{sec:background}

\subsection{Linear Gaussian SCMs and Latent Confounding}
\label{sec:scm}

A \emph{structural causal model} (SCM) over variables
\(\mathbf{X} = (X_1, \dots, X_n)\) specifies each variable as a deterministic
function of its causal parents and an independent exogenous noise term
\citep{pearl2009causality}. We work with \emph{linear Gaussian} SCMs, in
which each variable follows
\[
    X_i = \sum_{j \in \mathrm{pa}(i)} \beta_{ij} X_j + \varepsilon_i,
    \qquad \varepsilon_i \sim \mathcal{N}(0, \sigma_i^2),
\]
where \(\mathrm{pa}(i)\) is the parent set of \(X_i\) in a directed acyclic
graph (DAG) \(G\), the noise terms \( \varepsilon_i\) are mutually independent, and the
coefficients \(\beta_{ij}\) are bounded away from zero to ensure
non-trivial dependence along each edge. The DAG, coefficients, and noise
variances jointly induce a centered Gaussian distribution over
\(\mathbf{X}\).

The setting above assumes that every common cause of any pair of
observable variables is itself observable. When this fails, an unobserved
variable \(H\) may influence multiple observable variables simultaneously.
We adopt a Gaussian \emph{additive latent confounder} model: if
\(H \sim \mathcal{N}(0, \sigma_H^2)\) confounds two observable variables
\(X_j\) and \(X_k\) that are non-adjacent in the underlying DAG, their
structural equations become
\begin{equation}
    X_j = \sum_{\ell \in \mathrm{pa}(j)} \beta_{j\ell} X_\ell + \gamma_j H + \varepsilon_j,
    \quad
    X_k = \sum_{\ell \in \mathrm{pa}(k)} \beta_{k\ell} X_\ell + \gamma_k H + \varepsilon_k,
    \label{eq:confounder}
\end{equation}
with confounding coefficients \(\gamma_j, \gamma_k\) and noise terms
independent of \(H\). We choose \(H\) Gaussian so that the joint
distribution over \(\mathbf{X}\) remains Gaussian after marginalising
\(H\). Marginalisation introduces an additional covariance contribution
\(\gamma_j \gamma_k \sigma_H^2\) between \(X_j\) and \(X_k\): since \(X_j\) and \(X_k\) are not
connected by an edge, no DAG representing the observable causal structure
can account for this dependence.

\subsection{Bayesian Network Structure Learning}
\label{sec:bsl}

A Bayesian network represents the dependencies among variables
\(X_1, \dots, X_n\) through a directed acyclic graph (DAG) \(G\), in which
each variable is conditionally independent of its non-descendants given
its parents \(\mathrm{pa}_G(i)\) \citep{pearl2009causality}. The joint
distribution factorises as
\begin{equation}
    p(X_1, \dots, X_n \mid G) = \prod_{i=1}^n p\bigl(X_i \mid \mathrm{pa}_G(i)\bigr),
    \label{eq:factorisation}
\end{equation}
and different DAGs encode different sets of conditional independence
relations.

The goal of Bayesian structure learning is to infer the DAG from data.
Since different DAGs impose different conditional independence relations
and therefore different factorisations of the joint distribution, the
marginal likelihood \(P(\mathcal{D} \mid G)\) of the observed data depends
on the specific DAG \(G\). We assume the dataset consists of \(N\)
observations \(\mathcal{D} = \{\mathbf{x}^{(1)}, \dots, \mathbf{x}^{(N)}\}\),
where each \(\mathbf{x}^{(j)} \in \mathbb{R}^n\) is a realisation of the
random vector \(\mathbf{X} = (X_1, \dots, X_n)^\top\). Following the
Bayesian paradigm \citep{friedman2003being},
the posterior over DAGs is
\begin{equation}
    P(G \mid \mathcal{D})
    \;=\;
    \frac{P(\mathcal{D} \mid G)\, P(G)}{P(\mathcal{D})}
    \;\propto\;
    P(\mathcal{D} \mid G)\, P(G),
    \label{eq:posterior}
\end{equation}
where \(P(G)\) is the prior over DAGs and
\(P(\mathcal{D}) = \sum_{G'} P(\mathcal{D} \mid G')\, P(G')\) is a
normalising constant independent of \(G\). In the absence of prior
knowledge over graphs, we assume all DAGs to be equally likely a priori,
\(P(G) = c\) for all \(G\), so the posterior is proportional to the
marginal likelihood.

The marginal likelihood is obtained by integrating out the local
parameters \(\theta\)---the edge coefficients \(\{\beta_{ij}\}\) and noise
variances \(\{\sigma_i^2\}\) defining the conditional distributions of
each node given its parents---against a parameter prior:
\[
    P(\mathcal{D} \mid G)
    \;=\;
    \int P(\mathcal{D} \mid G, \theta)\, P(\theta \mid G)\, d\theta.
\]

Two DAGs are \emph{Markov equivalent} if they imply the same conditional
independence relations among \(X_1, \dots, X_n\); observational data
alone cannot distinguish between them \citep{verma1990equivalence,
chickering2002optimal}. By definition, two DAGs are
Markov equivalent if and only if they share the same skeleton (the
undirected graph obtained by ignoring edge directions) and the same set
of \emph{colliders}: triples \(X_i \to X_j \leftarrow X_k\) in which
\(X_i\) and \(X_k\) are not adjacent. The resulting equivalence classes
are called Markov equivalence classes (MECs). A marginal likelihood is
\emph{score-equivalent} if it assigns identical values to all DAGs within
an MEC, since these DAGs are
observationally indistinguishable. There exist only two
\textbf{Bayesian} scores that satisfy this property: the discrete BDe score for
categorical data and the Gaussian BGe score for linear Gaussian SCMs
\citep{grzegorczyk2023being}. We focus on
the BGe score throughout this work.

For linear Gaussian SCMs, the joint distribution of \(\mathbf{X}\) under
a DAG \(G\) is multivariate Gaussian,
\(\mathbf{X} \mid G \sim \mathcal{N}_n(\boldsymbol{\mu}^G, \Sigma^G)\),
with mean vector \(\boldsymbol{\mu}^G\) and covariance \(\Sigma^G\) chosen
to be coherent with the factorisation~\eqref{eq:factorisation} implied by
\(G\). The BGe score \citep{geiger2002parameter}
places a fully conjugate normal--Wishart prior on the joint mean and
precision \((\boldsymbol{\mu},\Sigma^{-1})\),
\[
    \boldsymbol{\mu} \mid \Sigma \sim \mathcal{N}_n\bigl(\boldsymbol{\mu}_0, \alpha_\mu^{-1} \Sigma\bigr),
    \qquad
    \Sigma^{-1} \sim \mathcal{W}_n(\alpha_w, T),
\]
parameterised by a prior mean \(\boldsymbol{\mu}_0 \in \mathbb{R}^n\),
prior precision \(\alpha_\mu > 0\) on the mean, prior degrees of freedom
\(\alpha_w > n - 1\), and positive-definite scale matrix
\(T \in \mathbb{R}^{n \times n}\). Here, \(\mathcal{W}_n(\alpha_w, T)\) denotes the \(n\)-dimensional Wishart distribution. Conjugacy with the Gaussian likelihood yields a normal--Wishart posterior, 
and  the marginal likelihood \(P(\mathcal{D} \mid G)\) of G can be computed analytically and is called the BGe score of G. Due to modularity property of the score, the log of the score decomposes into a sum of local scores:
\begin{equation}
   s(G) := \log P(\mathcal{D} \mid G) = \log \prod_{i=1}^{n} P\bigl(\mathcal{D} \mid i, \operatorname{pa}_G(i)\bigr) = \sum_{i=1}^{n} s\bigl(i, \operatorname{pa}_G(i)\bigr).
    \label{eq:decomp}
\end{equation}
Since the logarithm is monotonic and we use a uniform prior over graphs,
comparisons of \(s(G)\) values are equivalent to comparisons of the scores; we
therefore work with \(s(G)\) throughout. A more detailed exposition of
the BGe score can be found in \citep{geiger2002parameter, kuipers2014addendum}.


\section{Spurious Edge Inclusion under Latent Confounding}

The latent confounder $H$ in Equation~\eqref{eq:confounder}
induces an additional dependence between $X_j$ and $X_k$. Once this dependence
is strong enough, it cannot be explained by any directed edge among the
observed variables. Consequently, the marginal likelihood $P(\mathcal{D} \mid G)$ in
Bayesian structure learning favours DAGs that include a directed edge
$X_k \to X_j$ (or its reverse) to account for this correlation. We call this induced edge as spurious edge in the context of causality as it represents an incorrect causal link.

In this section, we quantify \textit{how strong} this confounding induced correlation must be for DAGs with such an edge to
be favoured by the marginal likelihood. We begin by deriving a critical correlation threshold based
on the partial correlation induced by the confounder. We then show that
asymmetry in the parent sets makes one direction of an edge strictly easier to be induced
than the other.

\subsection{Critical Threshold for Spurious Edge Inclusion}
\label{sec:critical-threshold}

The key to our analysis is that comparing two DAGs that differ only in a
single edge reduces to a local computation. Latent confounding induces an
excess dependence between the confounded variables that the true graph
cannot explain; the relevant question is whether the score favours adding
a direct edge between them to absorb this dependence. We therefore compare
the graph without this edge against the one with it, and ask when the
latter is preferred. Recall that the log of the BGe score is given by Equation~\eqref{eq:decomp}.

Let \(G = (V, E)\) be the underlying causal graph over variables
\(\{X_1, \dots, X_n\}\), with latent confounding between \(X_k\) and
\(X_j\). Consider two DAGs \(G\) and \(G'\) that are identical except that
\(G'\) contains the additional edge \(X_k \to X_j\), i.e.,
\(\operatorname{pa}_{G'}(j) = \operatorname{pa}_G(j) \cup \{k\}\), while all
other parent sets are unchanged. By decomposability 
in~\eqref{eq:decomp}, every node-wise term is identical in the two
graphs except that of \(X_j\), so their score difference reduces to a
single local term:
\[
s(G') - s(G)
= s(j, \operatorname{pa}_G(j) \cup \{k\}) - s(j, \operatorname{pa}_G(j)).
\]
The score favours including the edge precisely when this difference is
positive. Deciding whether to include \(X_k\) as a parent of \(X_j\) therefore
reduces to a \emph{local score comparison} involving only node \(j\), its
current parents \(P := \operatorname{pa}_G(j)\), and the candidate parent
\(k\) -- independent of the rest of the graph. The following proposition formalises this local score comparison, expressing
it in terms of the partial correlation induced between \(X_j\) and \(X_k\)
given \(P\) given by \(\rho_{jk \mid P}\) and identifying
the critical value at which the score begins to favour the additional edge. See Appendix~\ref{app:threshold-proof} for a detailed derivation.

\begin{proposition}[Critical correlation threshold]
\label{prop:local-critical-threshold}
Assume a regular linear Gaussian data-generating process on a causal
graph \(G = (V, E)\) over variables \(\{X_1, \dots, X_n\}\), with latent
confounding between \(X_k\) and \(X_j\). Let \(P = \operatorname{pa}_G(j)\)
be the current parent set of \(X_j\), and let \(G'\) denote the graph
obtained from \(G\) by adding the edge \(X_k \to X_j\) (i.e.,
\(\operatorname{pa}_{G'}(j) = P \cup \{k\}\), with all other parent sets
unchanged). Then, for sufficiently large sample size \(N\) and fixed BGe
hyperparameters, the BGe score prefers \(G'\) over \(G\) whenever the
partial correlation \(\rho_{jk \mid P}\) exceeds a critical value
\(\rho_c\); that is, when
\[
    N \log\!\left(\frac{1}{1-\rho_{jk \mid P}^2}\right) > \log N .
\]
The critical partial correlation \(\rho_c\) is defined by the equality
\[
    N \log\!\left(\frac{1}{1-\rho_c^2}\right) = \log N ,\qquad\text{i.e.}\qquad
    \rho_c^2 = 1 - N^{-1/N}.
\]
so that the score favours the additional edge whenever
\(\rho_{jk \mid P} > \rho_c\).
\end{proposition}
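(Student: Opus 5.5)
The plan is to write the local score difference $s(j, P\cup\{k\}) - s(j,P)$ in closed form using the BGe formula, then expand it asymptotically in $N$ with the hyperparameters held fixed. By the standard BGe construction \citep{geiger2002parameter,kuipers2014addendum}, each local score is a ratio of two marginal likelihoods of the normal--Wishart model. Concretely,
\[
s(j,Q) = \log \frac{p(\mathcal{D}_{\{j\}\cup Q})}{p(\mathcal{D}_Q)},
\]
and each factor involves Gamma-function ratios, powers of $\alpha_\mu/(N+\alpha_\mu)$, and determinants $|T_{QQ}|$ and $|R_{QQ}|$. Here $R = T + S_N + \frac{N\alpha_\mu}{N+\alpha_\mu}(\bar{\mathbf{x}}-\boldsymbol{\mu}_0)(\bar{\mathbf{x}}-\boldsymbol{\mu}_0)^\top$ is the posterior scale matrix.

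First, I take the difference for $Q=P\cup\{k\}$ versus $Q=P$. The determinant part collapses, through block/Schur-complement identities, to
\[
-\tfrac{N+\alpha_w-n+|P|+1}{2}\,\log\frac{R_{jj\cdot P\cup\{k\}}}{R_{jj\cdot P}} ,
\]
where $R_{jj\cdot Q}$ is the residual (Schur complement) variance of $j$ given $Q$. This ratio equals $1-r^2_{jk\mid P}$, with $r$ the partial correlation computed from $R$. The remaining contributions consist of a Gamma-function ratio with shifted arguments, the $\pi$ and $\alpha_\mu/(N+\alpha_\mu)$ factors, and prior-determinant terms from $T$.

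Second, I apply Stirling's formula: $\log\Gamma(\tfrac{N+c+1}{2})-\log\Gamma(\tfrac{N+c}{2}) = \tfrac12\log\tfrac N2 + O(1/N)$. Combined with the $\pi^{-1/2}$ and $\alpha_\mu$ factors, the extra parameter then contributes $-\tfrac12\log N + O(1)$. This is the usual BIC-type penalty of one free regression coefficient, while the $|T|$ ratios and $\alpha_\mu$ factors contribute only $O(1)$.

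Third, since $R/N \to \Sigma$ almost surely (the data are i.i.d. Gaussian, and the confounded marginal is still Gaussian under regularity), $r_{jk\mid P}\to\rho_{jk\mid P}$. Collecting terms gives
\[
s(G')-s(G)=\tfrac{N}{2}\log\frac{1}{1-\rho_{jk\mid P}^2}-\tfrac12\log N + O(1) + o_P(N\cdot\text{fluctuation}).
\]
Dropping lower-order terms, the sign is positive exactly when $N\log\frac{1}{1-\rho^2}>\log N$. Solving the equality gives $\log(1-\rho_c^2) = -\log N/N$, i.e.\ $\rho_c^2=1-N^{-1/N}$. The function $\rho\mapsto \log\frac{1}{1-\rho^2}$ is monotone increasing in $|\rho|$, which converts the condition into $\rho_{jk\mid P}>\rho_c$ (in absolute value).

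The main obstacle is justifying that the $O(1)$ hyperparameter terms and the sampling error in $r_{jk\mid P}$ are negligible. Strictly, the threshold is only the leading-order criterion: $\rho_c\sim\sqrt{\log N/N}$, so near the threshold the $O(1)$ terms and the $O_P(N^{-1/2})$ fluctuations of $r$ are of comparable order to the margin. I would therefore state the result as the asymptotic (BIC-order) decision rule. This amounts to interpreting ``prefers whenever $\rho>\rho_c$'' as holding when $N\log\frac1{1-\rho^2}-\log N$ dominates the bounded terms, and in particular for any fixed $\rho_{jk\mid P}\neq0$ as $N\to\infty$. I would verify this by writing the remainder explicitly and bounding it with the Stirling error bounds.
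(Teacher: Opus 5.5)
Your route differs from the paper's. The paper never opens up the BGe formula: it invokes the generic Laplace/BIC expansion $s(j,P)=\ell(\hat\theta_P)-\tfrac{m_P}{2}\log N+O_p(1)$ together with $\mathrm{RSS}_{P\cup\{k\}}/\mathrm{RSS}_P=1-\hat\rho^2_{jk\mid P}$, so the $-\tfrac12\log N$ comes directly from the one extra parameter. In your exact computation, however, the source of that penalty is misidentified, and your accounting as written gives the wrong sign.

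First, the determinant part does not collapse to the Schur-ratio term. With $c=\alpha_w-n$ and $p=|P|$, the exact contribution of the $R$-determinants to $s(j,P\cup\{k\})-s(j,P)$ is
\[
\frac{N+c+p+1}{2}\log\frac{R_{jj\cdot P}}{R_{jj\cdot P\cup\{k\}}}\;-\;\frac12\log\frac{|R_{\{j,k\}\cup P}|}{|R_{PP}|}.
\]
The second term is minus one half the log-determinant of the $2\times 2$ Schur complement of $R_{PP}$ in $R_{\{j,k\}\cup P}$. Since $R/N\to\Sigma$, that determinant grows like $N^2$, so the term equals $-\log N+O_P(1)$.

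Second, the factors you credit cannot produce the penalty. The factors $\bigl(\alpha_\mu/(N+\alpha_\mu)\bigr)^{1/2}$ and $\pi^{-N/2}$ appear identically in every local score of node $j$, so they cancel exactly in the difference. The $\pi$-powers of the multivariate Gamma functions also cancel, between posterior and prior. The Gamma ratio $\log\Gamma(\tfrac{N+c+p+2}{2})-\log\Gamma(\tfrac{N+c+p+1}{2})$ equals $+\tfrac12\log\tfrac N2+O(1/N)$, which has the wrong sign for a penalty.

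Evaluating exactly the terms you retain therefore gives $\Delta=\tfrac N2\log\frac{1}{1-r^2_{jk\mid P}}+\tfrac12\log N+O(1)$. That is a bonus for the extra edge, and there would be no threshold at all. The correct penalty is $-\log N$ from the dropped determinant term plus $\tfrac12\log N$ from the Gamma ratio.

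Once you restore that term, your displayed expansion $\Delta=\tfrac N2\log\frac{1}{1-r^2_{jk\mid P}}-\tfrac12\log N+O_P(1)$ holds. The rest of your argument then goes through: consistency of $r_{jk\mid P}$, solving for $\rho_c^2=1-N^{-1/N}$, and monotonicity in $|\rho|$.

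The repaired route has a real advantage over the paper's. It is exact, needs no regularity conditions for a Laplace approximation, and shows the $O(1)$ remainder explicitly: the $T$-determinant terms, the prior Gamma ratio, and $\tfrac{c+p+1}{2}\log\frac{1}{1-r^2}$.

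Your caveat that the threshold is only a leading-order rule is correct. It applies equally to the paper's proof, which discards the $O_p(1)$ term and substitutes $\rho_{jk\mid P}$ for $\hat\rho_{jk\mid P}$. At $\rho\approx\rho_c$ the margin vanishes, but neither the bounded remainder nor the sampling fluctuation of $N\hat\rho^2$ (of order $\sqrt{\log N}$) does.
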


\begin{remark}
The threshold condition has the closed-form solution
\(\rho_c^2 = 1 - e^{-\log N / N}\). For large \(N\),
expanding the exponential gives
\[
    \rho_c \approx \sqrt{\frac{\log N}{N}}.
\]
The graph-independent critical partial correlation therefore decreases with sample size:
more data lowers the correlation required to favour the edge, so weaker correlation \(\rho_{jk \mid P}\) caused by
confounding suffices to induce the spurious edge.
\end{remark}

\subsection{Asymmetry and the Preferred Direction}
\label{sec:asymmetry-preferred}

The above analysis considered adding \(X_k\) as a parent of \(X_j\),
conditioning on the set of observed parents \(P\) of \(X_j\). Without prior
causal knowledge, the reverse orientation \(X_j \to X_k\) is also a
candidate and the two compete score-wise. We show by an example that the
asymmetry of parent sets causes DAGs with a particular orientation of the spurious edge 
to be preferred over the DAGs with the reverse orientation of the same edge.

Consider the small SCM in Figure~\ref{fig:asymmetry-scm}, where \(X_j\)
has the observed parents set $P = \{ X_{p_1}, X_{p_2} \}$ . Let \(X_P = (X_{p_1}, X_{p_2})\) with coefficient
vector \(\beta_P\), \(X_k\) have no observed parents, and the latent
variable \(H\) confounds the pair:
\[
H \sim \mathcal{N}(0, \sigma_H^2), \quad
X_k = \gamma_k H + \varepsilon_k, \quad
X_j = \beta_P^\top X_P + \gamma_j H + \varepsilon_j,
\]
with mutually independent Gaussian noise terms
\(\varepsilon_j \sim \mathcal{N}(0, \sigma_j^2)\) and
\(\varepsilon_k \sim \mathcal{N}(0, \sigma_k^2)\), independent of \(H\).

The partial correlations for including the two candidate edges given the remaining graph stays the same are:
\[
\rho_{jk \mid P} = \frac{\gamma_j \gamma_k \sigma_H^2}
    {\sqrt{(\gamma_j^2 \sigma_H^2 + \sigma_j^2)(\gamma_k^2 \sigma_H^2 + \sigma_k^2)}},
\]
\[
\rho_{kj \mid \varnothing} = \frac{\gamma_j \gamma_k \sigma_H^2}
    {\sqrt{(\gamma_j^2 \sigma_H^2 + \sigma_j^2 + \beta_P^\top \Sigma_P \beta_P)(\gamma_k^2 \sigma_H^2 + \sigma_k^2)}},
\]
where \(\beta_P^\top \Sigma_P \beta_P = \operatorname{Var}(\beta_P^\top X_P)\)
is the variance of \(X_j\) explained by its observed parents. The two
expressions share the same numerator; the reverse edge's denominator
carries the extra \(\beta_P^\top \Sigma_P \beta_P\) term because the reverse
direction conditions on no parents. Hence \(\rho_{jk \mid P} > \rho_{kj \mid \varnothing}\).
So, whenever the
reverse partial correlation exceeds the critical threshold \(\rho_c\),
the forward one does too (by Proposition~\ref{prop:local-critical-threshold});
the converse is not generally true. In the regime where both correlations
exceed \(\rho_c\) and both candidate DAGs score higher than \(G\), the
choice between them reduces to a direct comparison of their local terms:
\[
s(G \cup \{X_k \to X_j\}) - s(G \cup \{X_j \to X_k\})
\approx \frac{N}{2}\left[
\log\!\left(\frac{1}{1-\rho_{jk \mid P}^2}\right) -
\log\!\left(\frac{1}{1-\rho_{kj \mid \varnothing}^2}\right)
\right] > 0,
\]
so \(G \cup \{X_k \to X_j\}\) is strictly preferred.
This asymmetry implies that, when a latent confounder induces a spurious
edge, the score favours the orientation directed \emph{towards} the
confounded variable with parents in the underlying causal graph over the orientation directed
towards the one without. More generally, the preferred orientation points
to the variable whose parents explain more of its variance: conditioning
on these parents reduces the residual variance and raises the induced
partial correlation, leading to DAGs with this orientation having a higher score than the one with opposite orientation.

\begin{figure}[h]
\centering
\begin{tikzpicture}[
    >=stealth, scale=0.7, every node/.style={font=\small}
]
\node (p1) {$X_{p_1}$};
\node (p2) [below of=p1] {$X_{p_2}$};
\node (xj) [right of=p1, xshift=1cm, yshift=-0.7cm] {$X_j$};
\node (h)  [above of=xj, yshift=-0.3cm, xshift=0.5cm] {$H$};
\node (xk) [right of=xj, xshift=1cm] {$X_k$};
\draw[->] (p1) -- (xj);
\draw[->] (p2) -- (xj);
\draw[->, dotted] (h) -- (xj);
\draw[->, dotted] (h) -- (xk);
\end{tikzpicture}
\caption{A minimal SCM illustrating the asymmetry: \(X_j\) has observed
parents \(P\), \(X_k\) has none, and the latent variable \(H\) confounds
the pair.}
\label{fig:asymmetry-scm}
\end{figure}


\section{Posterior Behaviour Beyond Critical Correlation}
\label{sec:structural}

We now examine \emph{how} Bayesian causal discovery behaves when the partial correlation induced by 
confounding exceeds the threshold established in
Section~\ref{sec:critical-threshold}: does the posterior concentrate on DAGs
that contain only a spurious edge between the confounded nodes while
preserving the correct causal relations among non-confounded variables, or on DAGs
that also corrupt those relations that were recovered correctly in the unconfounded
setting. Throughout this section, ``latent confounding'' refers to the
case when the partial correlation between the confounded variables induced by confounding is beyond the critical threshold of
Section~\ref{sec:critical-threshold}.

\subsection{Markov Equivalence and Posterior Connection}
\label{sec:orientation-expansion}
We now illustrate, through simple graphs, that posterior preference for DAGs over others is
governed not only by the BGe score, but also by the dynamics of the collider structures in the highest scoring DAGs.

Consider a chain $X_1 \to X_2 \to X_3$ as the underlying causal graph. It is
Markov equivalent to the reverse chain $X_1 \leftarrow X_2 \leftarrow X_3$
and the fork $X_1 \leftarrow X_2 \to X_3$ (where $X_2$ is the \emph{fork node},
the common cause of $X_1$ and $X_3$). These three DAGs are observationally
indistinguishable, and since the BGe score is score-equivalent within a MEC
\citep{geiger2002parameter}, Bayesian structure learning assigns them equal
posterior mass. The posterior probability of a directed edge is the total
mass of the DAGs containing it; the true edges $X_1 \to X_2$ and $X_2 \to X_3$
 receive only one-third and two-thirds of the posterior mass, respectively.

In contrast, the collider $X_1 \to X_2 \leftarrow X_3$ is uniquely
identifiable from observational data, since it induces a distinct
conditional independence pattern. Its MEC contains only a single DAG, so
the true edges are learned with high confidence. Now suppose $X_1$ and
$X_3$ are latently confounded. The score favours DAGs containing an edge
between $X_1$ and $X_3$ to explain the induced correlation. Once this edge
is present, the collider at $X_2$ is destroyed, and no
new collider can form anywhere among the three nodes since all three pairs of nodes are now adjacent. Consequently, every orientation of
the resulting fully connected three-node DAG, shown in
Figure~\ref{fig:triangle-and-parent}(a), is Markov equivalent and
receives the same score. Posterior mass therefore distributes across all
six equivalent DAGs in its MEC, and the true edges $X_1 \to X_2$ and $X_3 \to X_2$
receive substantially lower posterior probability than in the unconfounded
setting -- reducing confidence in their detection.

Now consider the true causal graph shown in Figure~\ref{fig:triangle-and-parent}(b). 
With no confounding, all DAGs Markov equivalent to it have the highest score, and since the MEC contains two DAGs, shown in  Figure~\ref{fig:triangle-and-parent}(b) and Figure~\ref{fig:triangle-and-parent}(c), the probability is distributed equally between the edges $X_1 \to X_4$ and $X_4 \to X_1$. Under confounding between $X_1$ and $X_3$, the DAG in Figure~\ref{fig:triangle-and-parent}(d) receives the highest score among all other DAGs due to parent asymmetry discussed in Section~\ref{sec:asymmetry-preferred}. However, in this case, though the collider at $X_2$ is destroyed, $X_4 \to X_1 \leftarrow X_3$ forms a new collider at $X_1$. The MEC now only contains this single graph and, hence the posterior probability of the true causal edge $X_4 \to X_1$
increases compared to the unconfounded setting.

\begin{figure}[htbp]
\centering
\begin{tabular}{cccc}
\begin{tikzpicture}[>=stealth, scale=0.5, every node/.style={font=\small}]
    \node (x1) {$X_1$};
    \node (x2) [above right of=x1] {$X_2$};
    \node (x3) [below right of=x2] {$X_3$};

    \draw[->] (x1) -- (x2);
    \draw[->] (x3) -- (x2);
    \draw[->] (x1) -- (x3);
\end{tikzpicture}
&
\begin{tikzpicture}[>=stealth, scale=0.5, every node/.style={font=\small}]
    \node (x4) {$X_4$};
    \node (x1) [right of=x4] {$X_1$};
    \node (x2) [above right of=x1] {$X_2$};
    \node (x3) [below right of=x2] {$X_3$};

    \draw[->] (x4) -- (x1);
    \draw[->] (x1) -- (x2);
    \draw[->] (x3) -- (x2);
\end{tikzpicture}
&
\begin{tikzpicture}[>=stealth, scale=0.5, every node/.style={font=\small}]
    \node (x4) {$X_4$};
    \node (x1) [right of=x4] {$X_1$};
    \node (x2) [above right of=x1] {$X_2$};
    \node (x3) [below right of=x2] {$X_3$};

    \draw[->] (x1) -- (x4);
    \draw[->] (x1) -- (x2);
    \draw[->] (x3) -- (x2);
\end{tikzpicture}
&
\begin{tikzpicture}[>=stealth, scale=0.5, every node/.style={font=\small}]
    \node (x4) {$X_4$};
    \node (x1) [right of=x4] {$X_1$};
    \node (x2) [above right of=x1] {$X_2$};
    \node (x3) [below right of=x2] {$X_3$};

    \draw[->] (x4) -- (x1);
    \draw[->] (x1) -- (x2);
    \draw[->] (x3) -- (x2);
    \draw[->] (x3) -- (x1);
\end{tikzpicture}

\\
(a) & (b) & (c) & (d) 
\end{tabular}
\caption{Four DAGs illustrating how collider configurations affect the size of the Markov equivalence class.}
\label{fig:triangle-and-parent}
\end{figure}
These examples demonstrate that size of the MEC of the highest scoring DAGs
determines the probability of recovering true edge directions. Importantly,
colliders are the essential graphical objects: their formation or destruction
directly changes the MEC of the graph, and consequently governs whether the
posterior remains concentrated or spreads across multiple competing
structures.

\subsection{Structural Characterization and Failure Regimes}
\label{sec:structural-characterization}

Consider a true causal DAG \(G=(V,E)\) over observed variables
\(\{X_1,\dots,X_n\}\), and suppose latent confounding is introduced
between variables \(X_j\) and \(X_k\), where \((X_j, X_k) \notin E\).
In the
absence of confounding, the highest-scoring structures are \(G\) and the
DAGs Markov equivalent to it, and posterior mass concentrates on this
MEC. The posterior probabilities of edges are determined by the frequency
with which those edges appear across these DAGs.

Under latent confounding, the covariance structure generated by the
original causal relations remains present in the data, so the true causal
directions continue to receive strong scores. However, the additional
dependence induced between \(X_j\) and \(X_k\) is not explained by \(G\).
Consequently, Bayesian structure learning assigns higher scores to
structures that additionally contain a spurious edge between the
confounded variables. The highest-scoring structures are therefore \(G\)
augmented with one of the two orientations of this edge, together with
their Markov equivalents:
\[
G \cup \{X_j \to X_k\}
\qquad\text{or}\qquad
G \cup \{X_k \to X_j\},
\]
and the DAGs Markov equivalent to each.

When both orientations satisfy acyclicity constraints, two factors
combine to determine the dominant orientation. First, one orientation
may achieve a higher score due to structural asymmetries such as
differing parent sets. Second, a smaller MEC concentrates posterior mass
on fewer DAGs. When both effects favour the same orientation it dominates
decisively; when they conflict, the outcome depends on whether the score
advantage outweighs the difference in MEC size.

Although MECs are properties of the full graph, latent confounding modifies
the observational structure only through the confounded variables and their
adjacent edges. We define the \emph{confounded region} as the local subgraph of the DAG \(G\) induced by the
confounded nodes together with their immediate neighbouring nodes in the underlying true causal graph. This does not contain the spurious edge. Consequently, relative to the unconfounded setting, changes in
the MEC arise from local changes in 
the confounded region. The remaining graph preserves the same equivalence
relations as before confounding, making the local confounded structure
sufficient for characterizing the resulting posterior behaviour even in larger
graphs.

Since MECs are determined by the skeleton and colliders, the key structural mechanism is whether the spurious edge preserves, creates, or weakens collider-based orientation constraints within the confounded region. This yields the
following \textit{structural characterisation} of the posterior behaviour, depending on whether the confounded region initially contains colliders. In what follows, 'number of colliders' refers to 
the number of distinct collider nodes, i.e. nodes with two or more non-adjacent
parents.  

\paragraph{Case 1: Confounded region contains colliders.}

\begin{itemize}
    \item[(a)] At least one orientation of the spurious edge preserves or
    increases the number of colliders in the confounded region. For e.g. in 
    Figure~\ref{fig:triangle-and-parent}(b) with confounding between $X_1$, $X_3$.
    \item[(b)] Both orientations of the spurious edge reduce the number of colliders in the confounded region. For e.g the collider $X_1 \to X_2 \leftarrow X_3$ with confounding between $X_1$, $X_3$.
\end{itemize}

\paragraph{Case 2: Confounded region contains no colliders.}

\begin{itemize}
    \item[(a)] At least one orientation of the spurious edge creates new colliders in the confounded region. For e.g. the chain  $X_1 \to X_2 \to X_3 \to X_4$ with confounding between $X_1$, $X_4$.
    
    \item[(b)] Neither orientation of the spurious edge creates
    colliders in the confounded region. For e.g. $X_1 \leftarrow X_2 \to X_3 $ with confounding between $X_1$, $X_3$.
\end{itemize}

These structural cases now induce two qualitatively different failure regimes,
where ``failure'' refers to the inference of a confounding-induced incorrect causal graph. The regimes differ in how this incorrect structure manifests in the
posterior:

\paragraph{Silent failure:}
The posterior assigns high confidence to both the spurious edge and the
surrounding true causal edges, making the inferred structure appear reliable
despite containing a spurious edge indistinguishable from true ones. Cases
1(a) and 2(a) correspond to this regime, where the small induced MEC keeps
posterior mass concentrated.

\paragraph{Noisy failure:}
The posterior assigns low confidence to surrounding true causal edges, visibly
signalling structural uncertainty in the confounded region. Cases 1(b) and 2(b) correspond to this regime: both produce a large MEC
that spreads posterior mass.
In Case 1(b), confounding actively causes this by destroying colliders.
In Case 2(b), the large MEC was already present due to absence of colliders,
and confounding leaves it unchanged---but the spurious edge is still
introduced, leaving true causal edges at low confidence.\\

\noindent
These regimes show that the effect of latent confounding is structurally
localised: depending on where it occurs in the true causal graph,
confounding may either introduce only a spurious edge while preserving
surrounding causal relations, or yield a posterior that visibly
loses confidence in the causal edges of the confounded region. A practitioner has no access to the unconfounded baseline and cannot tell
from the posterior alone whether a failure has occurred. Silent failures
are therefore particularly dangerous---the posterior appears confident
with no signal that the inferred structure contains a spurious edge.
Noisy failures, by contrast, are self-announcing through visibly low
confidence and appropriately warn the practitioner not to trust the
inferred causal relations.

\section{Experiments}
\label{sec:experiments}

We demonstrate the structural characterization of
Section~\ref{sec:structural-characterization} on synthetic
graphs of \(n = 5\) nodes. Small graphs admit exact posterior computation
via DAG enumeration, so any observed posterior behaviour is attributable
to confounding rather than inference error; and since the analysed
mechanisms are fundamentally local, small graphs suffice to isolate the
structural regimes.

\subsection{Data Generation and Evaluation Metrics}
\label{sec:exp-setup}

We sample 20 distinct (DAG, confounded pair) instances for each of the four structural case identified above at \(n=5\).
DAGs are generated by drawing a random topological order over the variables
and including each forward edge independently with probability \(p = 0.5\),
retaining only graphs whose edge count falls between 0.3 and 0.7 of all possible edges to avoid degenerate sparsity or near-complete graphs (which would
leave no non-adjacent pairs). For each DAG, every non-adjacent pair is classified into one of
the four cases by its collider structure. From each DAG we retain one
(DAG, pair) instance per case, and continue sampling DAGs until 20 such
instances are collected for every case.

Given a (DAG \(G\), confounded pair \((X_k, X_j)\)) instance, we generate
\(N = 5000\) i.i.d. samples from a linear Gaussian SCM with edge weights
\(w_{ij} \sim \mathcal{N}(0,2)\) constrained to \(|w_{ij}| \geq 1\),
observation noise \(\varepsilon_j \sim \mathcal{N}(0,1)\), and a latent
confounder \(H \sim \mathcal{N}(0,1)\) acting on the confounded pair:
\[
X_j = \sum_{i \in \mathrm{pa}(j)} w_{ij} X_i + \alpha H + \varepsilon_j,
\qquad
X_k = \sum_{i \in \mathrm{pa}(k)} w_{ik} X_i + \alpha H + \varepsilon_k.
\]
For each instance we draw \(10\) independent datasets by resampling the
edge weights and noise, giving 200 datasets per case in total.

For each DAG we report three metrics: \textbf{posterior entropy} \(H(P) = -\sum_{G'}
P(G' \mid \mathcal{D}) \log P(G' \mid \mathcal{D})\); the\textbf{ mean marginal
probability of the true edges of \(G\)}; and the \textbf{marginal probability of
the dominant orientation of the spurious edge}. The marginal probability
of an edge \(X_i \to X_j\) is the total posterior mass over all DAGs
containing that edge,
\(p(X_i \to X_j \mid \mathcal{D}) = \sum_{G' : X_i \to X_j \in G'}
P(G' \mid \mathcal{D})\).
We evaluate metrics at \(\alpha = 0\) (unconfounded) and \(\alpha = 0.8\).
Empirically, \(\alpha = 0.8\) induces a partial correlation exceeding the
critical value \(\rho_c\) (at \(N = 5000\)) for every sampled (DAG,
confounded pair) instance, ensuring all instances are past the threshold. Within each DAG, metrics are averaged across the 10 dataset runs and
then aggregated across the 20 DAGs per case as mean \(\pm\) standard
deviation. The true and spurious edge probabilities together capture the
practitioner-visible posterior state distinguishing silent from noisy
failure.
We use a uniform prior over DAG structures, \(P(G) \propto 1\), and
compute the exact posterior using the BGe score
\citep{geiger2002parameter}, evaluated via the corrected formulation of
\citet{kuipers2014addendum} with standard hyperparameter settings \(\alpha_\mu = 1\),
\(\alpha_w = n+2\), \(\nu = \mathbf{0}\), \(T = I_n\).

\subsection{Results}
\begin{table}[h]
\centering
\small
\begin{tabular}{lcccccc}
\toprule
& \multicolumn{2}{c}{Entropy} & \multicolumn{2}{c}{True edge prob} & \multicolumn{2}{c}{Spurious edge prob} \\
\cmidrule(lr){2-3}\cmidrule(lr){4-5}\cmidrule(lr){6-7}
Case & $\alpha{=}0$ & $\alpha = 0.8$ & $\alpha{=}0$ & $\alpha = 0.8$ & $\alpha{=}0$ & $\alpha = 0.8$ \\
\midrule
1(a) & 1.72 $\pm$ 0.45 & 1.67 $\pm$ 0.50 & 0.85 $\pm$ 0.08 & 0.87 $\pm$ 0.07 & 0.02 $\pm$ 0.02 & 0.83 $\pm$ 0.17 \\
1(b) & 1.72 $\pm$ 0.37 & 2.90 $\pm$ 0.43 & 0.85 $\pm$ 0.08 & 0.62 $\pm$ 0.09 & 0.08 $\pm$ 0.05 & 0.61 $\pm$ 0.07 \\
2(a) & 2.81 $\pm$ 0.30 & 2.29 $\pm$ 0.55 & 0.54 $\pm$ 0.08 & 0.70 $\pm$ 0.10 & 0.02 $\pm$ 0.01 & 0.89 $\pm$ 0.13 \\
2(b) & 2.86 $\pm$ 0.29 & 2.80 $\pm$ 0.35 & 0.57 $\pm$ 0.08 & 0.59 $\pm$ 0.10 & 0.03 $\pm$ 0.02 & 0.69 $\pm$ 0.12 \\
\bottomrule
\end{tabular}
\caption{Comparison between unconfounded (\(\alpha=0\)) and confounding 
beyond-threshold (\(\alpha=0.8\)) regimes of posterior metrics.}
\label{tab:aggregate}
\end{table}

Table~\ref{tab:aggregate} confirms the introduced case taxonomy. Cases 1(a) and 2(a)
keep the true edges highly confident while the spurious edge rises to
comparable confidence (silent failure), with 2(a) even improving the true
edges. Case 1(b) shows the noisy regime -- entropy nearly doubles and true
edge confidence collapses -- while Case 2(b) stays at low true edge
confidence with high entropy throughout. In all cases, the spurious edge,
negligible without confounding, appears at substantial probability under
confounding.

We now illustrate the transition across the confounding values 
\(\alpha \in [0,1]\) on a (DAG, confounded pair) instance
belonging to Case 1(a) (silent regime), shown in
Figure~\ref{fig:s2}.
\begin{figure}[htbp]
\centering
\begin{tabular}{ccc}
\begin{tikzpicture}[>=stealth, scale=0.45, every node/.style={font=\scriptsize}]
    \node (x1) {$X_1$};
    \node (x0) [below of=x1] {$X_0$};
    \node (x2) [right of=x1, below right of=x1] {$X_2$};
    \node (x3) [right of=x2] {$X_3$};
    \node (x4) [right of=x3] {$X_4$};
    \node (h) [above right of=x1] {$H$};
    \draw[->] (x0) -- (x2);
    \draw[->] (x1) -- (x2);
    \draw[->] (x2) -- (x3);
    \draw[->] (x3) -- (x4);
    \draw[->, dotted] (h) -- (x1);
    \draw[->, dotted] (h) -- (x3);
\end{tikzpicture}
&
\includegraphics[height=3cm]{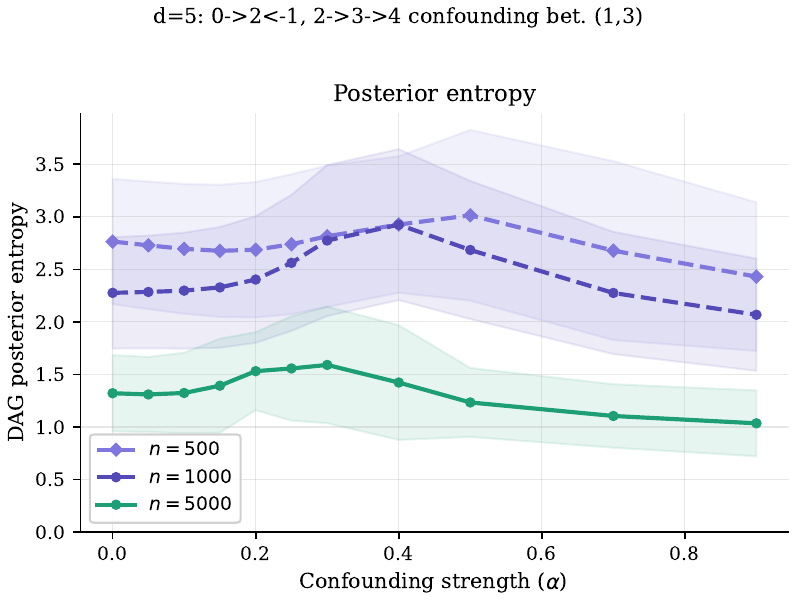}
&
\includegraphics[height=3cm]{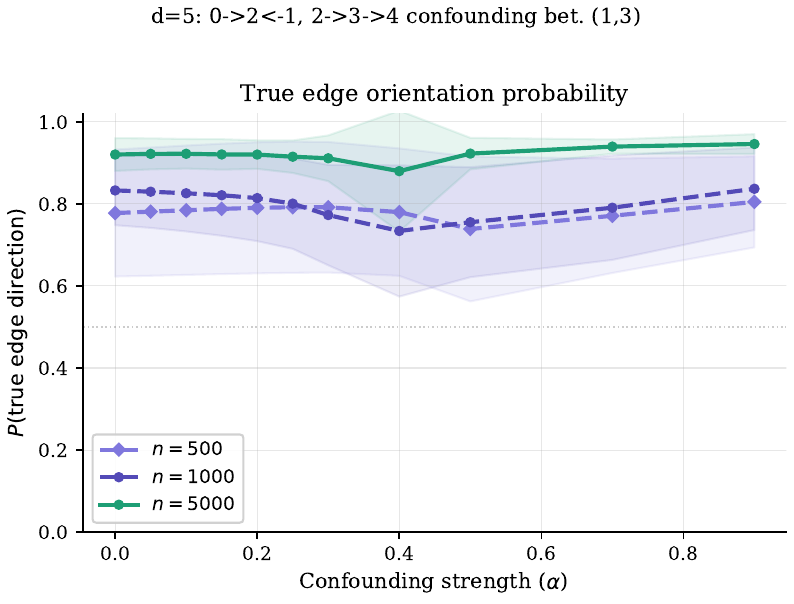}
\\
(a) & (b) & (c) 
\end{tabular}
\caption{(a) DAG with latent confounder \(H\); (b) posterior entropy; (c) true edge posterior probabilities.}
\label{fig:s2}
\end{figure}
Here, the
Posterior entropy (b) rises as competing DAGs gain score, peaks near the
critical threshold where structures with and without the spurious edge
explain the data equally well, then declines as the posterior
concentrates on the new small MEC. True edge probabilities (c) follow a
complementary trajectory: they remain stable or briefly decline near the threshold and recover beyond it, consistent with the silent regime. Increasing the sample size
$N$ shifts the transition to lower $\alpha$, illustrating the
sample-size dependence of the critical threshold derived in
Section~\ref{sec:critical-threshold}: more observations make the
spurious edge preferred at smaller partial correlation induced by weaker confounding coefficient $\alpha$.

\section{Discussion and Future Research}\textbf{}
\label{sec:discussion}
In this paper, we derive a critical correlation threshold above which the
BGe score favours a spurious edge between latently confounded variables.
We show that this threshold \emph{shrinks} with sample size -- meaning
that, paradoxically, more observational data makes the posterior more
susceptible to confounding, with even weakly induced correlations enough
to shift posterior mass toward incorrect causal structures. Beyond the threshold, the impact of confounding
is not uniform: it is shaped by how the induced spurious edge interacts
with the surrounding collider structure, producing
either broad posterior uncertainty (noisy failure) or highly confident but
incorrect structures (silent failure). Even in the simple setting considered
here, these failure modes reveal striking patterns in posterior behaviour
that practitioners would not anticipate from standard reliability cues. The
silent regime is particularly concerning, since highly confident posterior
edges may still correspond to structurally incorrect causal explanations. In such settings, even partial
domain knowledge about potentially confounded variable pairs can be valuable
for interpreting posterior confidence more cautiously and motivating additional
validation through interventions or sensitivity analysis.

Our analysis is restricted to linear Gaussian SCMs with a single additive
Gaussian latent confounder affecting two observed variables. Future work includes extending the characterization to non-Gaussian or
discrete settings, multiple interacting confounders, and larger confounded
regions involving several variables simultaneously. Another important direction
is understanding whether the same structural failure regimes remain observable
under practical approximate inference methods on much larger size graphs.
More broadly, these results  motivate confounder-aware posterior
diagnostics and uncertainty measures for Bayesian causal discovery.

\bibliographystyle{abbrvnat} 
\bibliography{references}

\appendix

\section{Derivation of the BGe Confounding Threshold}
\label{app:threshold-proof}

We prove Proposition~\ref{prop:local-critical-threshold}. The argument uses three standard
facts: decomposability of the BGe score, the large-sample BIC expansion of
Gaussian marginal likelihoods, and the expression of the Gaussian regression
likelihood ratio in terms of partial correlation.\\

\begin{proof}[Proof of Proposition~\ref{prop:local-critical-threshold}]
Let \(P = \operatorname{pa}_G(j)\) denote the current parent set of node
\(X_j\), and consider two DAGs \(G\) and \(G'\) that differ only by the
candidate edge \(X_k \to X_j\), so that
\(\operatorname{pa}_{G'}(j) = P \cup \{k\}\) and all other parent sets
coincide.

By decomposability in~\eqref{eq:decomp},
\(s(G) = \sum_{i=1}^{n} s(i, \operatorname{pa}_G(i))\). Since \(G\) and \(G'\) agree on every node
except \(X_j\), all node-wise terms cancel except that of \(X_j\), giving
\[
    s(G') - s(G)
    =
    s(j, P \cup \{k\}) - s(j, P)
    =: \Delta.
\]
The edge is favoured by the score precisely when \(\Delta > 0\).
For fixed hyperparameters and parent-set size, the BGe marginal likelihood
admits the Laplace (BIC) expansion
\citep{schwarz1978estimating, kass1995bayes}
\[
    s(j, P)
    =
    \ell(\hat\theta_P)
    - \frac{m_P}{2}\log N
    + O_p(1),
\]
where \(N\) is the number of i.i.d.\ samples, \(\ell(\hat\theta_P)\) is
the maximised Gaussian log-likelihood for the regression of \(X_j\) on
its parents \(X_P\), and \(m_P\) is the number of local parameters for
this regression. Adding \(X_k\) as a parent increases the local dimension
by one, \(m_{P \cup \{k\}} - m_P = 1\), so
\[
    \Delta
    =
    \ell(\hat\theta_{P\cup\{k\}}) - \ell(\hat\theta_P)
    - \frac{1}{2}\log N
    + O_p(1).
\]

For Gaussian linear regression, the maximised log-likelihood difference
equals
\[
    \ell(\hat\theta_{P\cup\{k\}}) - \ell(\hat\theta_P)
    =
    \frac{N}{2}\log\!\left(\frac{\mathrm{RSS}_P}{\mathrm{RSS}_{P\cup\{k\}}}\right),
\]
where \(\mathrm{RSS}_P\) is the residual sum of squares from regressing
\(X_j\) on \(X_P\). By the standard partial-correlation identity,
\(\mathrm{RSS}_{P \cup \{k\}} / \mathrm{RSS}_P = 1 - \hat\rho_{jk \mid P}^2\),
where \(\hat\rho_{jk \mid P}\) is the sample partial correlation between
\(X_j\) and \(X_k\) given \(X_P\). Hence
\[
    \Delta
    =
    \frac{N}{2}\log\!\left(\frac{1}{1 - \hat\rho_{jk \mid P}^2}\right)
    - \frac{1}{2}\log N
    + O_p(1).
\]

The first term grows linearly in \(N\) while the penalty grows as
\(\log N\), so for large \(N\) the sign of \(\Delta\) is governed by
these two terms. The score favours the enlarged parent set (\(\Delta > 0\))
when
\[
    N \log\!\left(\frac{1}{1 - \hat\rho_{jk \mid P}^2}\right) > \log N.
\]

Under the regular linear Gaussian data-generating process, the sample
partial correlation is consistent, \(\hat\rho_{jk \mid P} \xrightarrow{p}
\rho_{jk \mid P}\) as \(N \to \infty\). The population analogue of the
above condition defines a critical partial correlation \(\rho_c\) by
\[
    N \log\!\left(\frac{1}{1 - \rho_c^2}\right) = \log N,
    \qquad\text{i.e.}\qquad
    \rho_c^2 = 1 - N^{-1/N}.
\]
The score favours adding the edge whenever the confounding induces a
partial correlation exceeding \(\rho_c\).
\end{proof}

-

\end{document}